\begin{document}

\title{Few-shot Adaptation of Multi-modal \\ Foundation Models: A Survey}

\author{
Fan Liu$^{1}$,
Tianshu Zhang$^{1}$,
Wenwen Dai$^{1}$,
Wenwen Cai$^{1}$,
Xiaocong Zhou$^{1}$,
and Delong Chen$^{2}$

$^{1}$Hohai University \ \ 
$^{2}$HKUST

\texttt{delong.chen@connect.ust.hk}

\thanks{This work was partially supported by National Nature Science Foundation of China (62372155), Joint Fund of Ministry of Education for Equipment Pre-research (8091B022123), Research Fund from Science and Technology on Underwater Vehicle Technology Laboratory (2021JCJQ-SYSJJ-LB06905), Key Laboratory of Information System Requirements, No: LHZZ 2021-M04, Water Science and Technology Project of Jiangsu Province under grant No.2021063, Qinglan Project of Jiangsu Province.}
}

\maketitle

\begin{abstract}
    Multi-modal (vision-language) models, such as CLIP, are replacing traditional supervised pre-training models (e.g., ImageNet-based pre-training) as the new generation of visual foundation models. These models with robust and aligned semantic representations learned from billions of internet image-text pairs and can be applied to various downstream tasks in a zero-shot manner. However, in some fine-grained domains like medical imaging and remote sensing, the performance of multi-modal foundation models often leaves much to be desired. Consequently, many researchers have begun to explore few-shot adaptation methods for these models, gradually deriving three main technical approaches: 1) prompt-based methods, 2) adapter-based methods, and 3) external knowledge-based methods. Nevertheless, this rapidly developing field has produced numerous results without a comprehensive survey to systematically organize the research progress. Therefore, in this survey, we introduce and analyze the research advancements in few-shot adaptation methods for multi-modal models, summarizing commonly used datasets and experimental setups, and comparing the results of different methods. In addition, due to the lack of reliable theoretical support for existing methods, we derive the few-shot adaptation generalization error bound for multi-modal models. The theorem reveals that the generalization error of multi-modal foundation models is constrained by three factors: domain gap, model capacity, and sample size. Based on this, we propose three possible solutions from the following aspects: 1) adaptive domain generalization, 2) adaptive model selection, and 3) adaptive knowledge utilization. 
\end{abstract}

\begin{IEEEkeywords}
Multi-modal foundation models, Vision-language pre-training, Few-shot learning, Parameter efficient fine-tuning.
\end{IEEEkeywords}

% \linenumbers
\section{Introduction}
\label{intro}
\IEEEPARstart{A}{rtificial} intelligence is increasingly being applied to a wide range of key industries, including voice recognition, image recognition, autonomous driving, intelligent manufacturing, medical diagnosis, financial risk control and so on. In the process of empowering various fields with artificial intelligence technology, there are often challenges related to fragmented and diversified demands. In the past, models often had small parameter sizes and limited generalization capabilities. And one model could only cope with a single scenario, resulting in high costs and poor generalization performance. Recently, an increasing number of researchers have started focusing on pre-trained foundation models with greater generalization.

Since 2018, the training data and parameter sizes of foundation models such as BERT \cite{Devlin2019BERT}, Pangu \cite{Zeng2021PanGu}, PaLM \cite{Chowdhery2022PaLM}, GPT-4 \cite{OpenAI2023GPT}, etc. have grown exponentially, resulting in significant performance improvements in various natural language understanding tasks. Meanwhile, the development of foundation models is gradually evolving from single modalities such as text, speech, vision, etc. to multi-modal fusion. More and more research organizations have turned their attention to multi-modal pre-trained foundation models, such as ViLBERT \cite{Lu2019ViLBERT}, CLIP \cite{Radford2021Learning}, DeCLIP \cite{Li2022Supervision}, FILIP \cite{Yao2022FILIP}, PyramidCLIP \cite{Gao2022PyramidCLIP}, OFA \cite{Cai2020Once}, BEiT-3 \cite{Wang2022Image}, ERNIE-ViL \cite{Shan2022ERNIE} and Data2vec \cite{Baevski2022data2vec}.

In early 2021, OpenAI released CLIP, a large-scale multi-modal model for aligning images and texts, which is pre-trained using billions of internet data to obtain rich visual language knowledge through contrastive learning. While the pre-trained CLIP model can achieve zero-shot predictions by employing text features as classification weights during the inference stage, this approach typically excels only in general domains like ImageNet and tends to underperform when dealing with data from certain fine-grained domains. The reason behind this is that such models primarily utilize data from the general domain during their pre-training phase, and when confronted with specific downstream tasks, the data distribution often diverges from pre-training data. Hence, it becomes necessary to fine-tune the model using the specific data of downstream tasks. To improve the generalization performance of the model through fine-tuning, researchers first proposed a prompt-based fine-tuning adaptation method (e.g., CoOp \cite{Zhou2022Learning}), which treats the fixed text inputs of the CLIP text side as learnable vectors and then fine-tunes them with a small number of samples to adapt to the downstream tasks. Another method commonly employed to enhance the few-shot adaptation capability is adapter-based fine-tuning, like CLIP-Adapter \cite{Gao2021CLIP}. This method involves adding simple adapter structures within the pre-trained model and then fine-tuning the adapter parameters using a small amount of sample data, enabling the foundation model to adapt to downstream tasks. In addition, methods that introduce foundation language models or external knowledge such as knowledge graphs (e.g., CuPL \cite{Pratt2022What}) can help the model to handle unseen samples better, enhance its semantic comprehension and robustness, and thus improve its performance in few-shot adaptation tasks. The above-mentioned three kinds of methods have been widely used in various downstream adaptation tasks, but there is a lack of a comprehensive survey that systematically sorts out the methods. Therefore, we elaborate and compare these methods in detail and explore their future directions to further improve the performance and generalization ability of pre-trained models. Contributions of this paper are as follows:

 \begin{itemize}
     \item We comprehensively review and sort out multi-modal few-shot adaptation methods, and classify existing methods into prompt-based fine-tuning adaptation methods, adapter-based fine-tuning adaptation methods, adaptation methods based on external knowledge, and other methods. Within the prompt-based fine-tuning adaptation methods, we further subdivide them into text prompt fine-tuning, visual prompt fine-tuning, multi-modal prompt, and multi-task prompt methods. Regarding adapter-based fine-tuning adaptation methods, we categorize them into single-modal adapter fine-tuning and multi-modal adapter fine-tuning. As for methods employing external knowledge, we distinguish between pre-training methods with external knowledge and downstream adaptation methods leveraging external knowledge.
     \item We review 11 commonly used datasets for evaluating the downstream generalization performance of multi-modal foundation models. We provide a detailed description of four experimental setups for verifying the adaptation performance of multi-modal foundation models under few-shot conditions. The experimental results for the four different setups are presented, and a comparative analysis of these results is performed. We highlight the reasons why different types of methods can effectively enhance the generalization performance of multi-modal foundation models.
     \item We discuss the common shortcomings of few-shot adaptation methods for existing multi-modal foundation models and analyze the domain adaptation problem. Starting from the error bound in cross-domain generalization from statistical machine learning theory, we derive the error bound for few-shot adaptation with multi-modal foundation models, which reveals that the main challenges faced by existing methods are ineffective adaptation of upstream and downstream domain distributions, lack of adaptability in model selection and insufficient utilization of data and knowledge.
 \end{itemize}

\section{Pre-training of Multi-modal Foundation Models }
\label{sec: Pre-training of Multi-modal Foundation Models}
In recent years, large-scale pre-training models have received extensive attention from academia and industry. Initially, the related works of foundation model pre-training mainly focus on the field of natural language processing, in which the self-supervised language models such as BERT \cite{Devlin2019BERT} and GPT \cite{Radford2018Improving} have shown better natural language understanding and generation capabilities than traditional methods. In the field of computer vision, the paradigm has also shifted from supervised pre-training to self-supervised pre-training. The performances of self-supervised pre-trained visual models have significantly improved, evolving from initial models based on data augmentation like SimCLR \cite{Chen2020Simple} and MoCo \cite{He2020Momentum} to more recent approaches based on random masking methods such as MAE \cite{He2022Masked} and BEiT \cite{Bao2022BEiT}. However, pre-trained language models are unable to receive visual inputs, resulting in an inability to extend their advantage in language understanding to multi-modal downstream tasks such as visual question answering (VQA). On the other hand, the supervised signals used for visual pre-training are often limited to data augmentation and stochastic masks, which prevents them from learning richer semantic representations in the open world. As a result, we have witnessed a recent surge in the development of large-scale pre-trained multi-modal models that combine visual and language modalities, as illustrated in Table \ref{tab:TABLE1}.

\begin{table}[htb]\centering
\caption{Multi-modal pre-trained foundation models}
\label{tab:TABLE1}
\setlength{\tabcolsep}{1mm}{
\renewcommand{\arraystretch}{1.5}
\begin{tabular}{cccc}
\hline
\textbf{Date} & \textbf{Institution} & \textbf{Publication}  & \textbf{Method}   \\ \hline
2021.02 & OpenAI     & ICML 2021    & CLIP \cite{Radford2021Learning}         \\ 
2021.06 & Google     & ICML 2021    & ALIGN \cite{Jia2021Scaling}            \\ 
2021.07 & Renmin University of China       & ArXiv        & BriVL \cite{Huo2021WenLan}        \\
2021.09 & Kuaishou Technology         & ArXiv        & EffificientCLIP \cite{Wang2021EfficientCLIP} \\
2021.09 & SenseTime Research         & ICLR 2022    & DeCLIP \cite{Li2022Supervision}            \\
2021.11 & HUAWEI     & ICLR 2022    & FILIP \cite{Yao2022FILIP}            \\
2021.12 & Facebook   & ArXiv        & BoW \cite{Tejankar2021Fistful}    \\
2021.12 & Facebook   & ECCV 2022        & SLIP \cite{Mu2022SLIP}            \\ 
2022.02 & Salesforce & ICML 2022    & BLIP \cite{Li2022BLIP}             \\
2022.02 & JKU        & NeurIPS 2022 & CLOOB \cite{Fuerst2022CLOOB}          \\ 
2022.06 & MEGVII         & IEEE TNNLS        & ProtoCLIP \cite{Chen2022Prototypical}       \\
2022.09 & IDEA       & Arxiv        & Taiyi-CLIP \cite{Wang2022Fengshenbang}      \\
2022.11 & DAMO Academy        & ArXiv        & ChineseCLIP \cite{Yang2022Chinese}     \\ 
2023.03 & BAAI        & ArXiv        & EVA-CLIP \cite{Sun2023EVA}      \\ \hline
\end{tabular}}
\end{table}

A notable characteristic of the above multi-modal pre-trained foundation models lies in the ability to efficiently learn visual concepts from large-scale natural language supervision and embed image and text features into a shared semantic space, thus obtaining zero-shot prediction capability. However, when the downstream task's data belongs to some specific domains, such as remote sensing, healthcare, e-commerce, etc., which differ greatly from the pre-training data, the zero-shot prediction accuracy of the multi-modal foundation models will drop sharply. At this point, it is necessary to fine-tune the model with the help of the downstream task's data, for example, using linear probing or global fine-tuning methods. However, such methods often require a large number of samples for effective training, and the number of samples available in the actual downstream task is often limited by the tagging cost. To address this problem, there have been some initial explorations in the academic community that attempt to fine-tune multi-modal foundation models using small amounts of data so that they can be efficiently generalized to specific downstream applications. For example, there have been some works \cite{Lin2023Multimodality,Gao2021CLIP} to fine-tune CLIP, such as using linear classifiers, adapter layers, etc. The work on fine-tuning CLIP can achieve very good results on few-shot image recognition tasks, even surpassing some algorithms designed specifically for few-shot tasks.

\section{Few-shot Adaptation Methods for Multi-modal Foundation Models}
To effectively enhance the model's generalization performance in specific domains, it is necessary to fine-tune multi-modal foundation models using limited samples, enabling them to have broader applications. These methods can be defined as few-shot adaptation methods for multi-modal foundation models. This chapter will be divided into four sections to provide a detailed overview of existing methods for multi-modal foundation models, namely: prompt-based fine-tuning adaptation methods, adapter-based fine-tuning adaptation methods, adaptation methods based on external knowledge, and other methods.

\subsection{Prompt-based Fine-tuning Adaptation Methods}
\subsubsection{Textual Prompt-based Fine-tuning Adaptation}

In the field of natural language processing, prompt-based fine-tuning adaptation \cite{Lester2021Power,Shin2020AutoPrompt,Li2021Prefix,Reynolds2021Prompt,Liu2021GPT} is a classic approach to addressing the issue of few-shot generalization in large language models. It involves using a fixed part of the text input as a learnable vector and fine-tuning its parameters using downstream task data, enabling the model to adapt to specific downstream tasks. The advantage of this method lies in its ability to avoid the manual design of textual prompts, effectively mitigating the risk of overfitting by fine-tuning only a specific portion of the model input. Inspired by this, some researchers have also begun to design prompt-based fine-tuning adaptation methods for multi-modal foundation models. CoOp \cite{Zhou2022Learning} for the first time incorporates the idea of prompt learning into downstream task adaptation for multi-modal pre-trained foundation models. It uses learnable word embeddings to automatically construct context prompts instead of manually designing prompt templates for each task. As illustrated in Figure \ref{fig:fig1}, the individual category label $\left\{ object\right\} $ is transformed into a comprehensive textual prompt '$\left[ V\right] _{1},\left[ V\right] _{2},\ldots ,\left[ V\right] _{m},\left\{ object\right\} $'. Here, $\left[ V\right] _{i}$ represents the adaptable word vectors. The classification loss is then computed to fine-tune these word vectors using data from the downstream task, enabling the model to autonomously acquire text inputs adapted to the downstream task.

\begin{figure}
    \centering
    \includegraphics[width=0.5\textwidth]{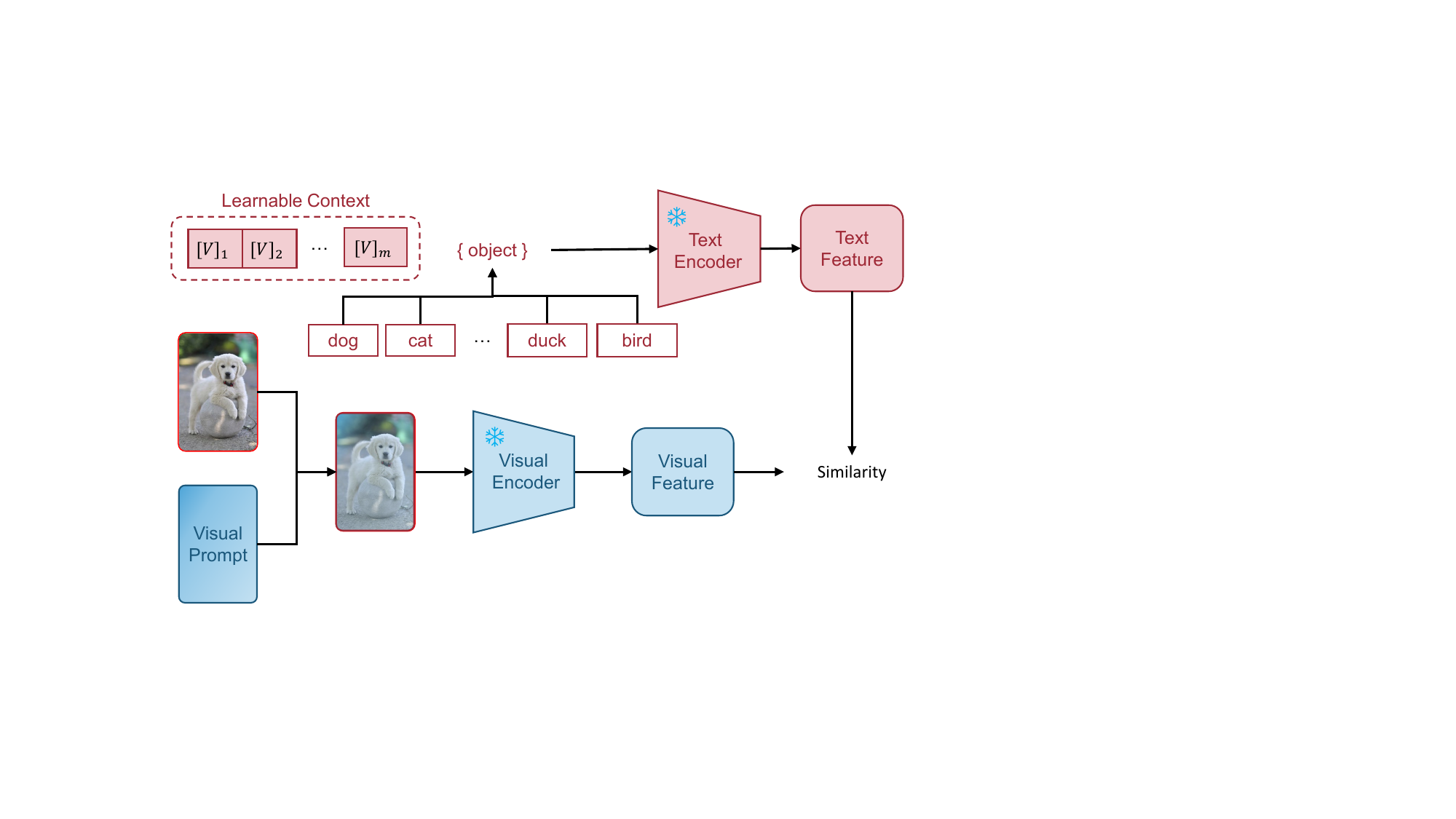}
    \caption{Schematic diagram of prompt-based fine-tuning adaptation methods.}
    \label{fig:fig1}
\end{figure}
  
Subsequently, Zhou et al. \cite{Zhou2022Conditional} introduced Conditional Contextual Optimization (CoCoOp), which constructs a meta-network to learn features from images. These features are then combined with prompt vectors to enhance CoOp's generalization performance on new category data. To leverage the zero-shot ability of pre-trained models effectively, Huang et al. \cite{Huang2022Unsupervised} proposed Unsupervised Prompt Learning (UPL). It selects zero-shot prediction results with high confidence as pseudo-labels to supervise prompt vector learning. Similarly, Prompt-aligned Gradient (ProGrad) \cite{Zhu2022Prompt} uses the zero-shot prediction results to constrain the direction of the model gradient update, thus avoiding the conflict between few-shot models and the generalized knowledge, and mitigating the problem of overfitting. However, due to the rich diversity of visual information, learning only one textual prompt makes it challenging to match complex visual data. To address this, Chen et al. \cite{Chen2023PLOT} proposed Prompt Learning with Optimal Transport (PLOT). It is used to learn multiple distinct textual prompts, where different textual prompts are regarded as descriptions of image locations, and the optimal transport theory is employed to match textual prompts with local image features. Lu et al. \cite{Lu2022Prompt} introduced Prompt Distribution Learning (ProDA) to learn prompt distributions and sample different textual prompts from these distributions. In addition, to make full use of the correlation between multi-task data, Ding et al. \cite{Ding2022Prompt} proposed Soft Context Sharing for Prompt Tuning (SoftCPT), which designs a task-sharing meta-network that splices predefined task names and learnable meta-prompts as inputs to fine-tune the prompts with the help of multi-task data.
% Likewise, to mitigate CoOp's overfitting, Ma et al. proposed Subspace Prompt Tuning (SubPT) \cite{Ma2023Understanding}, which eliminates the spurious components in gradients during back-propagation. Additionally, Test-time Prompt Tuning (TPT) \cite{Shu2022Test} tunes the prompt with a single test sample. The tuned prompt can be adapted to each task, which makes it suitable for zero-shot generalization. These methods have all focused on learning a single textual prompt. 

\subsubsection{Visual Prompt-based Fine-tuning Adaptation}

All of the above methods only fine-tune the textual side of CLIP, whereas CLIP, as a multi-modal model, places equal importance on both visual and textual sides. Fine-tuning only the textual prompts cannot improve the ability of the visual encoder to extract features, and the extracted visual features are likely to mismatch the target features of downstream tasks. Therefore, inspired by the textual prompt fine-tuning adaptation, a series of visual prompt fine-tuning adaptation methods have emerged. Existing visual prompt fine-tuning adaptation methods mainly include token-level fine-tuning adaptation and pixel-level fine-tuning adaptation. Visual Prompt Tuning (VPT) \cite{Jia2022Visual} introduces learnable visual prompts in token form. Class-Aware Visual Prompt Tuning (CAVPT) \cite{Xing2022Class} further includes a cross-attention module on this basis to make visual prompts more focused on the objectives of downstream tasks. In contrast to token-based methods, Bahng et al. \cite{Bahng2022Exploring} suggested adding pixel-level visual prompts directly around the image in a padding format to enhance visual prompts. Wu et al. \cite{Wu2022Unleashing} further proposed Enhanced Visual Prompting (EVP) by scaling and padding instead of padding directly around the original image.

\subsubsection{Multi-modal Prompt-based Fine-tuning Adaptation}

In addition to separately learning textual and visual prompts, it is also possible to simultaneously learn multi-modal prompts to better align textual and visual features. Textual and visual features have inherent differences, and in order to strengthen the connection between them when learning multi-modal prompts, Multi-modal Prompt Learning (MAPLE) \cite{Khattak2022MaPLe} uses copula functions to transform textual prompts into visual prompts. Unified Prompt Tuning (UPT) \cite{Zang2022Unified} on the other hand, first learns a universal prompt and then decomposes it into textual and visual prompts. On the other hand, Multi-task Visual Language Prompt Tuning (MVLPT) \cite{Shen2022Multitask} introduces the concept of multi-task learning, fine-tuning textual and visual prompts using cross-task knowledge.

\subsection{Adapter-based Fine-tuning Adaptation Methods}
\subsubsection{Single-modal Adapter-based Fine-tuning Adaptation}
In the field of Natural Language Processing (NLP), the concept of Adapters was first introduced by the Google team in 2019 for fine-tuning large language models \cite{Houlsby2019Parameter}. During training on downstream tasks, this method freezes the parameters of the original language model and only updates a small number of parameters added as adapter modules. Due to its advantages such as parameter efficiency, flexibility in design, and high robustness, this approach has received extensive research attention in the NLP field in recent years \cite{Ding2022Delta}. More recently, the adapter-based approach has also been applied to Vision Transformers (ViTs) in the computer vision domain. Jie et al. \cite{Jie2022Convolutional} addressed the issue of the lack of inductive bias of adapter structures in ViTs by introducing Convolutional Bypasses (Convpass). Additionally, they proposed Factor-Tuning (FacT, cited as \cite{Jie2023FacT}) to further improve the efficiency of parameter-efficient transfer learning to meet storage constraints in practical applications.

\subsubsection{Multi-modal Adapter-based Fine-tuning Adaptation}

The above adapter-based methods are all applicable to single-modal foundation models in natural language processing or computer vision. In recent years, adapter-based methods have also been extended to multi-modal foundation models to enhance downstream generalization ability. Gao et al. \cite{Gao2021CLIP} introduced CLIP-Adapter, which adds a fully connected layer adapter after freezing the backbone network to learn additional knowledge. It then merges this knowledge with zero-shot prediction results based on residual connections, as illustrated in Figure \ref{fig:fig2}.

\begin{figure}
    \centering\includegraphics[width=0.5\textwidth]{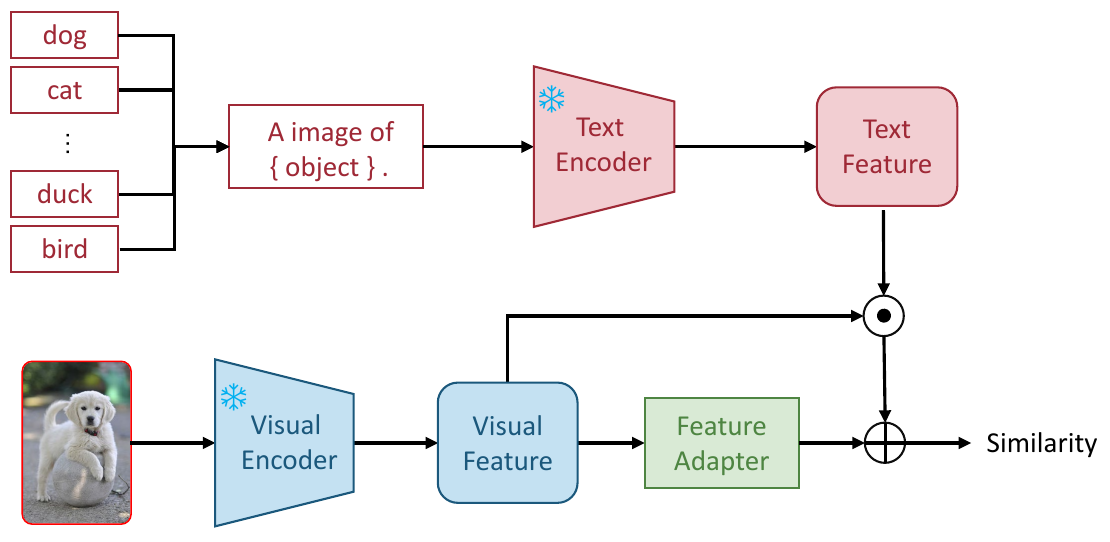}
    \caption{Schematic diagram of adapter-based fine-tuning adaptation methods.}
    \label{fig:fig2}
    \end{figure}

Building upon these developments, Zhang et al. introduced Tip-Adapter \cite{Zhang2021Tip}. This method constructs classifiers based on downstream few-shot training data and combines their predictions with the original zero-shot classifiers' results in a linear weighted manner to enhance the model's prediction performance. And SVL-Adapter \cite{Pantazis2022SVL} fuses a pre-trained self-supervised visual encoder before the adapter to extract more robust visual features. However, the above methods only use cross-modal contrastive loss and do not consider visually specific contrastive loss for few-shot datasets. To address this issue, Peng et al. \cite{Peng2022SgVA} proposed Semantic-guided Visual Adapting (SgVA-CLIP), which guides the parameter update of the visual adapter through implicit knowledge distillation to ensure the consistency of the image-text relationship. To enhance the cross-modal interaction capabilities of adapters, CALIP \cite{Guo2023CALIP} leverages attention maps to fuse text and image features and inserts two fine-tunable linear layers before and after fusion. In addition, Cross-Modal Adapter (CMA) \cite{Jiang2022Cross} and Multimodal Video Adapter (MV-Adapter) \cite{Zhang2023Multimodal} achieve cross-modal interaction by sharing adapter weights between two modalities. These methods consider both single-modal and multi-modal scenarios but do not fully integrate the advantages of each modality. To address this, Lu et al. \cite{Lu2023UniAdapter} proposed UniAdapter to unify single-modal and multi-modal adapters.

\subsection{External Knowledge-based Adaptation Methods}
\subsubsection{External Knowledge-based Pre-training Methods}

Pre-trained foundation models have the ability to learn general representations by mining relevant information from vast amounts of data on the internet. However, in such data-driven models, knowledge is often implicit and not explicitly linked to human understanding of the world or common sense knowledge. In recent years, data and knowledge-driven pre-training methods have been emerging, and researchers have started exploring the incorporation of more comprehensive external knowledge, such as knowledge graphs, into foundation models. This integration aims to make these models more robust, reliable, and interpretable. ERNIE \cite{Zhang2019ERNIE} incorporates a knowledge encoder for entity knowledge extraction and heterogeneous information fusion. K-BERT \cite{Liu2020K} retrieves external knowledge relevant to the model input and constructs sentence trees with rich contextual knowledge as model input. In recent years, some efforts have also started to inject knowledge into pre-training for multi-modal foundation models. For example, ERNIE-ViL \cite{Yu2021ERNIE} integrates knowledge from scene graphs, KM-BART \cite{Xing2021KM} models general visual knowledge by creating additional pre-training tasks, and K-LITE \cite{Shen2022K} incorporates various external knowledge sources, including WordNet and Wikipedia definitions.

\subsubsection{External Knowledge-based Downstream Adaptation Methods}
\begin{figure}
    \centering\includegraphics[width=0.5\textwidth]{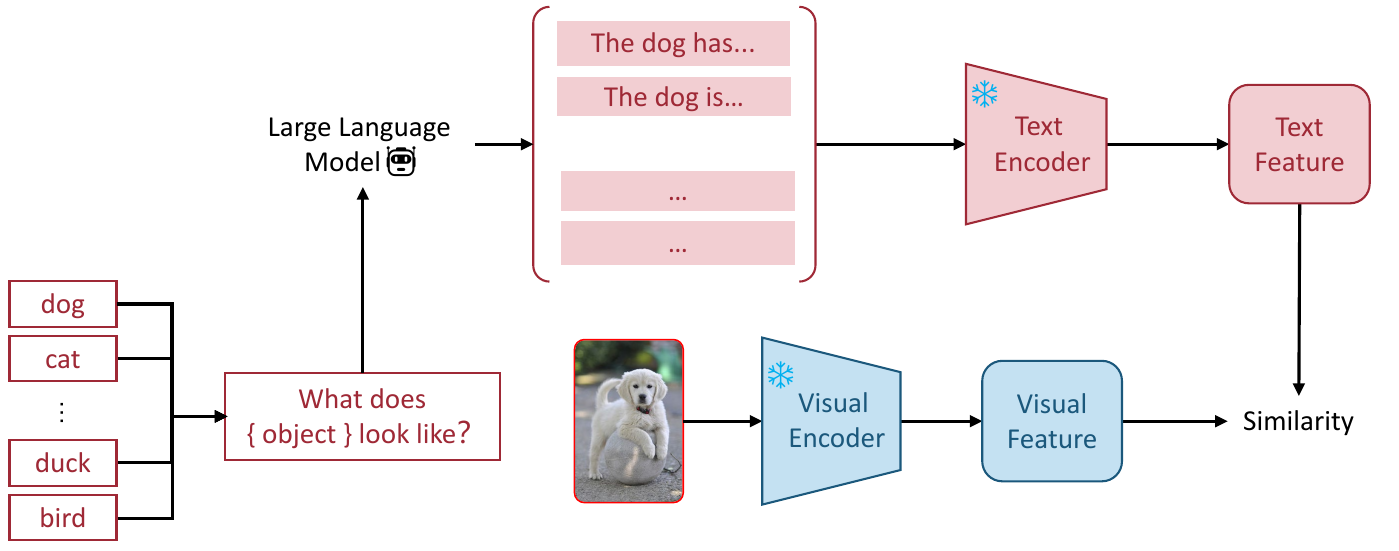}
    \caption{Schematic diagram of external knowledge-based fine-tuning adaptation methods.}
    \label{fig:fig3}
    \end{figure}  
    
The methods mentioned above introduce external knowledge during the pre-training phase. However, in downstream few-shot adaptation scenarios with limited data samples, it is also necessary to enhance external knowledge to ensure the model's performance. One of the most common approaches is to generate richer textual descriptions for each category by querying a large language model. An illustration of this method is shown in Figure \ref{fig:fig3}. Customized Prompts via Language models (CuPL) \cite{Pratt2022What} is the first method to integrate external knowledge into the downstream generalization process of multi-modal foundation models. CuPL achieves this by asking GPT-3 questions to generate multiple descriptive statements for each category, enriching the semantics of categories and thereby improving zero-shot classification performance. However, the sentences generated by CuPL using GPT-3 may have issues with poor descriptiveness and reliability. To address these issues, Menon et al. \cite{Menon2023Visual} further refined the knowledge enhancement process based on GPT-3. They prompted GPT-3 to generate semantic attribute descriptions in the form of phrases, enhancing the model's interpretability. To strike a balance between interpretability and performance, Language Guided Bottlenecks (LaBo) \cite{Yang2022Language} uses GPT-3 to generate a large candidate feature descriptor space, taking into account both the discriminability of features with respect to other classes and the coverage of the current class. It filters out the optimal sub-descriptor space for classification decisions, thereby uncovering the model's decision rationale. ELEVATER \cite{Li2022ELEVATER} also incorporates definitions from sources like GPT-3, WordNet, and Wiktionary. Experimental results indicate that external knowledge can enhance downstream generalization performance for multi-modal foundation models. However, different sources of knowledge have different emphases and properties. For instance, WordNet has relatively rich and accurate knowledge but lower coverage, while GPT-3 has a broader knowledge coverage but may lack reliability. Additionally, unlike the methods mentioned above that use external knowledge to enhance textual semantics, SuS-X \cite{Udandarao2022SuS} focuses on enhancing visual samples for multi-modal models. It augments the few-shot training sets for downstream tasks through image retrieval from the LAION-5B dataset \cite{Schuhmann2022LAION} or generates image samples based on Stable Diffusion \cite{Rombach2022High}, which aims to model the true distribution of downstream data more accurately and reliably.

\subsection{Other Methods}

In addition to the three categories of methods mentioned above, there are some approaches to fine-tuning multi-modal foundation models from the perspectives of weight parameter fusion, model reconstruction, and cross-attention. Specifically, Wise-FT \cite{Wortsman2022Robust} fuses the original and fine-tuned model parameters by linear interpolation, which enables the model to acquire specific knowledge from downstream data while retaining as much generic knowledge as possible. MaskCLIP \cite{Zhou2022Extract} directly modifies the structure of the CLIP image encoder by removing the query embedding layer and the key embedding layer. It replaces the value embedding layer and the last linear layer with a 1×1 convolutional layer, allowing the model to extract denser image features. VT-Clip \cite{Zhang2021VT} introduces a visual-guided attention mechanism, which enhances the semantic correlation between the textual features and the image data of the downstream tasks, thus effectively improving the generalization performance of the multi-modal foundation models.

\section{Datasets and Comparison of Experimental Results}

There are 11 commonly used datasets for evaluating the downstream generalization performance of multi-modal foundation models, namely: 2 general target datasets (ImageNet \cite{Deng2009ImageNet} and Caltech101 \cite{FeiFei2007Learning}), 5 fine-grained classification datasets (OxfordPets \cite{Parkhi2012Cats}, StanfordCars \cite{Krause20133D}, Flowers102 \cite{Nilsback2008Automated}, Food101 \cite{Bossard2014Food} and FGVCAircraft \cite{Maji2013Fine}), 1 scene recognition dataset (SUN397 \cite{Xiao2010SUN}), 1 action recognition dataset (UCF101 \cite{Soomro2012UCF101}), 1 texture dataset (DTD \cite{Cimpoi2014Describing}) and 1 satellite image dataset (EuroSAT \cite{Helber2019EuroSAT}). These datasets cover a range of different visual tasks and collectively form a more comprehensive benchmark for evaluating multi-modal foundation model performance in various scenarios.

To evaluate the generalization performance of multi-modal foundation models under few-shot conditions, four experimental settings are commonly used, namely:

\textbf{Few-shot Learning:} Building upon the 11 datasets mentioned above, the training and test sets are partitioned. For each class in the training set, 1, 2, 4, 8, and 16 samples are extracted and used for training. Subsequently, the model's performance is evaluated on the test set. The primary aim of this experiment is to assess the impact of limited samples on the generalization performance.

\textbf{Base-to-new Generalization:} To evaluate the effectiveness of adaptation methods for multi-modal foundation models on previously unseen classes, all the classes from the 11 datasets are evenly divided into two groups. One group is called ‘base classes’ and the other group is called ‘new classes’. The multi-modal foundation model is trained only on the data from the base classes. Subsequently, evaluations are performed separately on both the base class and new class data. Performance on the base classes reflects the discriminability of features learned by the model, while performance on the new classes reflects the model's generalization ability. The harmonic mean of the results obtained on the base and new class data is adopted as a balance between discriminability and generalization ability.

\textbf{Domain Generalization:} To validate the generalization and domain shift capabilities of multi-modal foundation models' adaptation methods when dealing with Out-of-Distribution (OOD) data, ImageNet is selected as the source dataset, and the other four datasets (ImageNetV2 \cite{Recht2019Do}, ImageNet-Sketch \cite{Wang2019Learning}, ImageNet-A \cite{Hendrycks2021Natural} and ImageNet-R \cite{Hendrycks2021Many}) are selected as the target datasets. The target datasets have the same category information as the source dataset but different data distributions. The model is trained solely on the source dataset and subsequently evaluated on the target datasets.

\textbf{Cross-Dataset Transfer:} To validate the generalization performance on different datasets, ImageNet is chosen as the source dataset, and the remaining 10 datasets are selected as target datasets. The model is trained on ImageNet and then tested on the target datasets. The source dataset and the target datasets have almost no overlap in classes, which can test the model's generalization ability on different class datasets. 

\begin{table*}[htb]
\caption{Comparison of few-shot learning experimental results \cite{Zhang2023Vision}}
\centering
\label{tab:TABLE2}
\resizebox{\textwidth}{!}{
\renewcommand{\arraystretch}{1.5}
\begin{tabular}{cccccccccccccc}
\hline
\textbf{Method}     & \textbf{Image Encoder} & \rotatebox{90}{\begin{tabular}[c]{@{}l@{}}\textbf{Average}\end{tabular}} & \rotatebox{90}{\begin{tabular}[c]{@{}l@{}}\textbf{ImageNet \cite{Deng2009ImageNet}}\end{tabular}} & \rotatebox{90}{\begin{tabular}[c]{@{}l@{}}\textbf{Caltech101 \cite{FeiFei2007Learning}}\end{tabular}} & \rotatebox{90}{\begin{tabular}[c]{@{}l@{}}\textbf{Pets \cite{Parkhi2012Cats}}\end{tabular}} & \rotatebox{90}{\begin{tabular}[c]{@{}l@{}}\textbf{Cars \cite{Krause20133D}}\end{tabular}} & \rotatebox{90}{\begin{tabular}[c]{@{}l@{}}\textbf{Flowers102 \cite{Nilsback2008Automated}}\end{tabular}} & \rotatebox{90}{\begin{tabular}[c]{@{}l@{}}\textbf{Food101 \cite{Bossard2014Food}}\end{tabular}} & \rotatebox{90}{\begin{tabular}[c]{@{}l@{}}\textbf{Aircraft \cite{Maji2013Fine}}\end{tabular}} & \rotatebox{90}{\begin{tabular}[c]{@{}l@{}}\textbf{SUN397 \cite{Xiao2010SUN}}\end{tabular}} & \rotatebox{90}{\begin{tabular}[c]{@{}l@{}}\textbf{DTD \cite{Cimpoi2014Describing}}\end{tabular}}  & \rotatebox{90}{\begin{tabular}[c]{@{}l@{}}\textbf{EuroSAT \cite{Helber2019EuroSAT}}\end{tabular}} & \rotatebox{90}{\begin{tabular}[c]{@{}l@{}}\textbf{UCF101 \cite{Soomro2012UCF101}}\end{tabular}} \\ \hline
Baseline \cite{Radford2021Learning}    & ViT-B/16      & 71.7    & 70.2     & 95.4       & 94.1 & 68.6 & 74.8       & 90.6    & 31.1     & 72.2   & 56.4 & 60.6    & 73.5   \\
Baseline \cite{Radford2021Learning}   & ViT-L/14      & 73.7    & 76.2     & 92.8       & 93.5 & 78.8 & 78.3       & 93.8    & 37.2     & 68.4   & 55.7 & 59.6    & 76.9   \\ 
CoOp \cite{Zhou2022Learning}       & ViT-B/16      & 71.6    & 71.9     & 93.7       & 94.5 & 68.1 & 74.1       & 85.2    & 28.7     & 72.5   & 54.2 & 68.7    & 67.5   \\
CoCoOp \cite{Zhou2022Conditional}     & ViT-B/16      & 75.8    & 73.1     & 95.8       & 96.4 & 72.0 & 81.7       & 91.0    & 27.7     & 78.3   & 64.8 & 71.2    & 77.6   \\
UPL \cite{Huang2022Unsupervised}        & ResNet-50     & 68.4    & 61.1     & 91.4       & 89.5 & 71.0 & 76.6       & 77.9    & 21.7     & 66.4   & 55.1 & 71.0    & 70.2   \\
ProDA \cite{Lu2022Prompt}   & ResNet-50     & -       & 65.3     & 91.3       & 90.0 & 75.5 & 95.5       & 82.4    & 36.6     & -      & 70.1 & 84.3    & -      \\
ProGrad \cite{Zhu2022Prompt}    & ResNet-50     & 67.9    & 62.1     & 91.5       & 93.4 & 62.7 & 78.7       & 81.0    & 21.9     & 70.3   & 57.8 & 59.0    & 68.5   \\
PLOT \cite{Chen2023PLOT}       & ResNet-50     & 73.9    & 63.0     & 92.2       & 87.2 & 72.8 & 94.8       & 77.1    & 34.5     & 70.0   & 65.6 & 82.2    & 77.3   \\
CAVPT \cite{Xing2022Class}      & ViT-B/16      & 83.2    & 72.5     & 96.1       & 93.5 & 88.2 & 97.6       & 85.0    & 57.9     & 74.3   & 72.6 & 92.1    & 85.3   \\
UPT \cite{Zang2022Unified}        & ViT-B/16      & 76.2    & 73.2     & 96.1       & 96.3 & 71.8 & 81.0       & 91.3    & 34.5     & 78.7   & 65.6 & 72.0    & 77.2   \\
TPT \cite{Shu2022Test}        & ViT-B/16      & 64.8    & 69.0     & 94.2       & 87.8 & 66.9 & 69.0       & 84.7    & 24.8     & 65.5   & 47.8 & 42.4    & 60.8   \\
Tip-Adapter \cite{Zhang2021Tip} & ViT-B/16      & -       & 70.8     & -          & -    & -    & -          & -       & -        & -      & -    & -       & -      \\
SgVA-CLIP \cite{Peng2022SgVA}  & ViT-B/16      & -       & 73.3     & -          & -    & -    & -          & -       & -        & 76.4   & -    & -       & -      \\
CALIP \cite{Guo2023CALIP}      & ResNet-50     & 59.4    & 60.6     & 87.7       & 58.6 & 77.4 & 66.4       & 56.3    & 17.7     & 86.2   & 42.4 & 38.9    & 61.7   \\
CuPL \cite{Pratt2022What}       & ViT-L/14      & -       & 76.6     & 93.4       & 93.8 & 77.6 & -          & 93.3    & 36.1     & 61.7   & -    & -       & -      \\
SuS-X \cite{Udandarao2022SuS}      & ResNet-50     & -       & 61.8     & -          & -    & -    & -          & -       & -        & -      & -    & 45.6    & 50.6   \\
SubPT \cite{Ma2023Understanding}       & ResNet-50     & 66.4    & 63.4     & 91.7       & 91.8 & 60.7 & 73.8       & 81.0    & 20.3     & 70.2   & 54.7 & 54.5    & 68.1   \\
VT-Clip \cite{Zhang2021VT}    & ResNet-50     & -       & -        & -          & 93.1 & -    & -          & -       & -        & -      & 65.7 & -       & -      \\ \hline
\end{tabular}}
\end{table*}

We collect the few-shot learning experimental results for selected methods on the 11 datasets from various sources, as shown in Table \ref{tab:TABLE2}. The backbone networks primarily used are CNN-based ResNet50, as well as Transformer-based ViT-B and ViT-L. All methods are trained with only 16 samples per class and then tested for image classification accuracy on the test set. The "Baseline" refers to the classification results of Linear-probe CLIP.

Based on Table \ref{tab:TABLE2}, the following conclusions can be drawn:1)Three multi-modal foundation models' adaptation methods for few-shot learning effectively improve the adaptability of foundation models to downstream tasks under few-shot conditions. Methods such as CoOp based on prompts, SgVA-CLIP based on adapters, and CuPL based on external knowledge show performance improvements of 1.7$\%$ (ViT-B/16), 3.1$\%$ (ViT-B/16), and 0.1$\%$ (ViT-L/14), respectively, on the ImageNet dataset. 2) For prompt-based fine-tuning methods, some unsupervised training methods yield results similar to supervised training methods. The accuracy of the unsupervised method UPL is only 0.4$\%$ higher than that of CoOp trained with 2 samples, while the accuracy of TPT (69.0$\%$) is not significantly different from CoOp trained with 16 samples (71.9$\%$). This is because unsupervised training methods can leverage unlabeled data effectively and can avoid overfitting compared to supervised training with only a small number of samples.

\begin{table}[htb]\centering
\caption{Comparison of base-to-new generalization experimental results}
\label{tab:TABLE3}
% \resizebox{\textwidth}{!}{
\setlength{\tabcolsep}{3mm}{
\renewcommand{\arraystretch}{1.5}
\begin{tabular}{cccccccccccccc}
\cline{1-5}
\multirow{2}{*}{\textbf{Method}} & \multirow{2}{*}{\textbf{Image Encoder}} & \multicolumn{3}{c}{\textbf{Average}} &  &  &  &  &  &  &  &  &  \\ \cline{3-5}
                        &                                & \textbf{Base}    & \textbf{New}     & \textbf{H}       &  &  &  &  &  &  &  &  &  \\ \cline{1-5}
CLIP \cite{Radford2021Learning}  & ViT-B/16                       & 69.34   & 74.22   & 71.70   &  &  &  &  &  &  &  &  &  \\
CoOp \cite{Zhou2022Learning}   & ViT-B/16                       & 82.63   & 67.99   & 74.60   &  &  &  &  &  &  &  &  &  \\
CoCoOp \cite{Zhou2022Conditional}  & ViT-B/16                       & 80.47   & 71.69   & 75.83   &  &  &  &  &  &  &  &  &  \\
ProDA \cite{Lu2022Prompt} & ViT-B/16                       & 81.56   & 72.30   & 76.65   &  &  &  &  &  &  &  &  &  \\
ProGrad \cite{Zhu2022Prompt}   & ViT-B/16                       & 82.48   & 70.75   & 76.16   &  &  &  &  &  &  &  &  &  \\
MAPLE \cite{Khattak2022MaPLe}  & ViT-B/16                       & 82.28   & 75.14   & 78.55   &  &  &  &  &  &  &  &  &  \\
GRAM  \cite{Li2023Gradient}    & ViT-B/16                       & 78.74   & 74.93   & 76.79   &  &  &  &  &  &  &  &  &  \\
KgCoOp \cite{Yao2023Visual}   & ViT-B/16                       & 80.73   & 73.60   & 77.00   &  &  &  &  &  &  &  &  &  \\
CPBPrompt \cite{Liu2023Patch} & ViT-B/16                       & 80.88   & 74.74   & 77.69   &  &  &  &  &  &  &  &  &  \\
UNIGRAM \cite{Li2023Gradient}  & ViT-B/16                       & 80.34   & 75.92   & 78.07   &  &  &  &  &  &  &  &  &  \\
LASP-V  \cite{Bulat2023LASP}   & ViT-B/16                       & 83.18   & 76.11   & 79.48   &  &  &  &  &  &  &  &  &  \\ \cline{1-5}
\end{tabular}}
\end{table}

As shown in Table \ref{tab:TABLE3}, we also collect the experimental results from base-to-new generalization for existing methods from various sources. From the table, we can conclude that: 1) Methods that perform well on the base classes often sacrifice their generalization performance on the new classes. This may be due to the model overfitting to the base classes. Meanwhile, disrupting or forgetting the potential generalization knowledge acquired during pre-training leads to a drop in performance when the model generalizes to unseen classes. 2) LASP-V and CPL perform well in terms of base class discriminability and new class generalization. Across the 11 datasets, LASP-V outperforms CLIP by 13.76$\%$, 1.89$\%$, and 7.78$\%$ for the three metrics of the base class, new class, and harmonic mean, respectively. On the ImageNet dataset, CPL outperforms CLIP by 6.38$\%$, 5.03$\%$, and 5.67$\%$ respectively. Notably, both of these methods exhibit significantly better performance on unseen classes compared to CLIP, which has strong zero-shot capabilities.   

% Please add the following required packages to your document preamble:
% \usepackage{multirow}
\begin{table*}[htb]
\centering
\caption{Comparison of domain generalization experimental results}
\label{tab:TABLE4}
% \resizebox{\textwidth}{!}{
\setlength{\tabcolsep}{1.4mm}{
\renewcommand{\arraystretch}{1.5}
\begin{tabular}{ccccccccccccccc}
\cline{1-7}
\multirow{2}{*}{\textbf{Method}} & \textbf{Source Domain} &  & \multicolumn{4}{c}{\textbf{Target Domain}}  \\ \cline{2-2} \cline{4-7}
 & \textbf{ImageNet \cite{Deng2009ImageNet}}  &  & \textbf{-V2 \cite{Recht2019Do}} & \textbf{-Sketch \cite{Wang2019Learning}} & \textbf{-A \cite{Hendrycks2021Natural}} & \textbf{-R \cite{Hendrycks2021Many}}  \\ \cline{1-7}
zero shot CLIP \cite{Radford2021Learning} & 66.73 &  & 60.83 & 46.15 & 47.77 & 73.96   \\
linear probe CLIP \cite{Radford2021Learning} & 65.85 &  & 56.26 & 34.77 & 35.68 & 58.43  \\
CoOp \cite{Zhou2022Learning} & 71.51 &  & 64.20 & 47.99 & 49.71 & 75.21 \\
CoCoOp \cite{Zhou2022Conditional}& 71.02 &  & 64.07 & 48.75 & 50.63 & 76.18 \\
VPT-shallow \cite{Jia2022Visual} & 68.98 &  & 62.10 & 47.68 & 47.19 & 76.10 \\
VPT-deep \cite{Jia2022Visual} & 70.57 &  & 63.67 & 47.66 & 43.85 & 74.42 \\
MAPLE \cite{Khattak2022MaPLe} & 70.72 &  & 64.07 & 49.15 & 50.90 & 76.98 \\
UPT \cite{Zang2022Unified} & 72.63 &  & 64.35 & 48.66 & 50.66 & 76.24  \\
KgCoOp \cite{Yao2023Visual}  & 71.20 &  & 64.10 & 48.97 & 50.69 & 76.70  \\
PBPrompt \cite{Liu2023Patch} & 70.90 &  & 64.40 & 49.10 & 51.00 & 76.40  \\
UNIGRAM \cite{Li2023Gradient} & 71.65 &  & 64.81 & 49.54 & 51.51 & 77.34 \\
TPT+CoOp \cite{Shu2022Test} & 73.61 &  & 66.83 & 49.29 & 57.95 & 77.27 \\
TPT+CoCoOp \cite{Shu2022Test} & 71.07 &  & 64.85 & 48.47 & 58.47 & 78.65 \\ \cline{1-7}
\end{tabular}}
\end{table*}

The collected domain generalization experimental results for existing methods from various sources are shown in Table \ref{tab:TABLE4}, and we can find that: 1) TPT can effectively combine with CoOp and CoCoOp and achieve state-of-the-art (SOTA) performance. When using ViT-B/16 as the backbone, the combination of TPT and CoOp outperforms zero-shot CLIP by 6.88$\%$, 6$\%$, 3.14$\%$, 10.18$\%$, and 3.31$\%$ on the ImageNet, -V2, -Sketch, -A, and -R datasets, respectively. 2) Compared to both textual-only prompt-based fine-tuning adaptation methods (e.g., CoOp, CoCoOp) and visual-only prompt-based fine-tuning adaptation methods (e.g., VPT), multi-modal prompt-based fine-tuning adaptation methods like UPT and MAPLE achieve more improvements. This suggests that fine-tuning the two modalities simultaneously is both sufficient and necessary for better performance.

% Please add the following required packages to your document preamble:
% \usepackage{multirow}
\begin{table*}[htb]
\centering
\caption{Comparison of cross-dataset transfer experimental results}
\label{tab:TABLE5}
\resizebox{\textwidth}{!}{
\renewcommand{\arraystretch}{1.5}
\begin{tabular}{ccclccccccccccc}
\hline
\multirow{2}{*}{\textbf{\thead{\\ \\ \\ Method}}} & \multirow{2}{*}{\textbf{\thead{\\ \\ Image \\ Encoder \\ Structure}}} & \textbf{\thead{Source \\ Dataset}} &  & \multicolumn{11}{c}{\textbf{\thead{Target \\ Dataset}}} \\ \cline{3-3} \cline{5-15} 
 &  &  \rotatebox{90}{\begin{tabular}[c]{@{}l@{}}\textbf{ImageNet \cite{Deng2009ImageNet}}\end{tabular}} &  & \rotatebox{90}{\begin{tabular}[c]{@{}l@{}}\textbf{Caltech101 \cite{FeiFei2007Learning}}\end{tabular}} & \rotatebox{90}{\begin{tabular}[c]{@{}l@{}}\textbf{Pets \cite{Parkhi2012Cats}}\end{tabular}} & \rotatebox{90}{\begin{tabular}[c]{@{}l@{}}\textbf{Cars \cite{Krause20133D}}\end{tabular}} & \rotatebox{90}{\begin{tabular}[c]{@{}l@{}}\textbf{Flowers102 \cite{Nilsback2008Automated}}\end{tabular}} & \rotatebox{90}{\begin{tabular}[c]{@{}l@{}}\textbf{Food101 \cite{Bossard2014Food}}\end{tabular}} & \rotatebox{90}{\begin{tabular}[c]{@{}l@{}}\textbf{Aircraft \cite{Maji2013Fine}}\end{tabular}} & \rotatebox{90}{\begin{tabular}[c]{@{}l@{}}\textbf{SUN397 \cite{Xiao2010SUN}}\end{tabular}} & \rotatebox{90}{\begin{tabular}[c]{@{}l@{}}\textbf{DTD \cite{Cimpoi2014Describing}}\end{tabular}} & \rotatebox{90}{\begin{tabular}[c]{@{}l@{}}\textbf{EuroSAT \cite{Helber2019EuroSAT}}\end{tabular}} & \rotatebox{90}{\begin{tabular}[c]{@{}l@{}}\textbf{UCF101 \cite{Soomro2012UCF101}}\end{tabular}} & \rotatebox{90}{\begin{tabular}[c]{@{}l@{}}\textbf{Average}\end{tabular}} \\ \hline
CoOp \cite{Zhou2022Learning} & ViT-B/16 & 71.5 &  & 93.7 & 89.1 & 64.5 & 68.7 & 85.3 & 18.5 & 64.2 & 41.9 & 46.4 & 66.6 & 63.9 \\
CoCoOp \cite{Zhou2022Conditional} & ViT-B/16 & 71.0 &  & 94.4 & 90.1 & 65.3 & 71.9 & 86.1 & 22.9 & 64.4 & 45.7 & 45.4 & 68.2 & 65.7 \\
MAPLE \cite{Khattak2022MaPLe} & ViT-B/16 & 70.7 &  & 95.5 & 90.5 & 65.6 & 72.2 & 86.2 & 24.7 & 67.0 & 46.5 & 48.1 & 68.7 & 66.3 \\
SubPT \cite{Ma2023Understanding} & ResNet-50 & 62.6 &  & 88.3 & 87.4 & 56.2 & 63.4 & 77.8 & 16.7 & 61.8 & 39.7 & 29.1 & 61.7 & 58.6 \\
VPT \cite{Jia2022Visual} & ViT-L/14 & 70.6 &  & 95.8 & 92.9 & 76.1 & 95.0 & 86.2 & 41.0 & 71.6 & 69.8 & 91.5 & 82.8 & 79.4 \\
UPT \cite{Zang2022Unified} & ViT-L/14 & 72.6 &  & 95.9 & 93.0 & 84.3 & 97.1 & 85.0 & 46.8 & 75.9 & 70.7 & 90.5 & 84.0 & 81.4 \\
TPT \cite{Shu2022Test} & ResNet-50 & 60.7 &  & 87.0 & 84.5 & 58.5 & 62.7 & 74.9 & 17.6 & 61.5 & 40.8 & 28.3 & 60.8 & 57.7 \\
UNIGRAM \cite{Li2023Gradient}& ViT-B/16 & 71.7 &  & 94.7 & 90.8 & 66.8 & 73.1 & 86.7 & 25.3 & 68.0 & 48.1 & 52.6 & 71.0 & 67.7 \\
CPBPrompt \cite{Liu2023Patch} & ViT-B/16 & 70.9 &  & 94.9 & 90.8 & 65.3 & 72.4 & 86.4 & 24.6 & 67.8 & 45.2 & 45.1 & 68.8 & 66.1 \\ \hline
\end{tabular}}
\end{table*}

The collected results of cross-dataset adaptation experiments for existing methods from various sources are shown in Table \ref{tab:TABLE5}. The purpose of this experiment is to validate the generalization performance of adaptation methods across different datasets. Using the ImageNet dataset with 1000 classes as the source dataset, the methods are initially trained with 16 samples from each class of the source dataset. Subsequently, these methods are tested on 10 different target datasets. 

From the data in the table, the following observations can be made: 1) For SubPT and TPT with ResNet50 as the backbone, as well as other methods with ViT-based backbones, they achieve similar results on the source dataset but exhibit significant variations in performance on different target datasets. For example, they achieve higher accuracy on datasets like Caltech101 and Pets, which have categories similar to ImageNet. However, their performances are much lower on datasets like Aircraft and DTD, which contain fine-grained data related to various aircraft and textures. These datasets have greater category differences from ImageNet, and hence, the accuracies on these datasets are much lower than 50$\%$, indicating that transferring specific knowledge learned from ImageNet to downstream tasks with significantly different categories is challenging. 2) Whether it is prompt-based fine-tuning methods or adapter-based fine-tuning methods, fine-tuning both modalities simultaneously tends to yield better results than fine-tuning only one modality. For instance, the multi-modal prompt learning method MAPLE achieves higher accuracies on 10 target datasets compared to the textual-only prompt learning method CoCoOp (ViT-B/16), and UPT achieves higher accuracies compared to the visual-only prompt learning method VPT (ViT-L/14). This suggests that for multi-modal foundation models like CLIP, fine-tuning both textual and visual aspects is essential for improved generalization performance.

\section{Analysis and Discussion} 

The current researches on few-shot adaptation for multi-modal foundation models primarily include prompt-based fine-tuning adaptation methods, adapter-based fine-tuning adaptation methods, and external knowledge-based adaptation methods. Based on the current research status on few-shot adaptation, we summarise the following issues and challenges:

\textbf{1) Ineffective Adaptation of Upstream and Downstream Domain Distributions:} Existing few-shot adaptation methods for multi-modal foundation models mostly focus on the category information in downstream task data while neglecting domain distribution information. Additionally, in the domain adaptation scenarios of multi-modal foundation models, the scarcity of samples in the target domain makes modeling challenging, while the abundance of samples in the source domain results in high modeling costs. Furthermore, current domain adaptation methods are tailored for a single modality, ignoring cross-modal information interaction, which does not meet the requirements of adaptation for multi-modal foundation models. 

\textbf{2) Lack of Adaptability in Model Selection:} There is a wide variety of existing adapter structures with different fine-tuning characteristics, learning capabilities, and model capacities. Different downstream tasks often require different optimal adapters or combinations of multiple adapters, and the feature extraction processes vary significantly across different modalities. Currently, adapter selection is typically reliant on heuristic methods or exhaustive search-based approaches, which are costly and challenging to guarantee performance.

\textbf{3) Insufficient Utilization of Data and Knowledge:} Although existing data augmentation strategies can enrich the downstream training set and reduce the risk of model overfitting to some extent. However, this empirically manually designed augmentation approach is costly and does not ensure that the selected data augmentation can be effectively adapted to the specific downstream task. Although new textual descriptions can be generated by introducing external knowledge, there is no guarantee that the textual knowledge describing the category attributes can effectively collaborate with the image data.

To address the above-mentioned issues and challenges, we summarize and refine the existing theories related to cross-domain adaptation for multi-modal foundation models, which makes the work of few-shot adaptation more systematic and guides few-shot cross-domain adaptation.

% \begin{thm}\label{thm1}
% Define the expected error of the adapted fine-tuned model in the target domain as $\in_\mathbb{T}\left(h_{N}\right)$ and the empirical error of the fine-tuned model in the target domain as $\in_{T}\left(h_{N}\right)$. Introduce the notion of adaptability for both the upstream and downstream tasks as $\widetilde{\lambda_{S,T}}$. Characterize the gap between the source and target domains as $E[d(S, T)]$, and quantify the dissimilarity between the original model and the fine-tuned model by $KL(h_N\ ||\ h_0)$, which encapsulates the fine-tuned model's capacity – referring to the extent of parameter adjustments relative to the original model. Here, N signifies the volume of data samples. Consequently, the expected error bound $\in_\mathbb{T}\left(h_{N}\right)$ for the adapted fine-tuned model in the target domain is related to its empirical error $\in_{T}\left(h_{N}\right)$, the inter-domain discrepancy $E[d(S, T)]$, the model's capacity $KL(h_N\ ||\ h_0)$, the number of samples $N$, and the upstream and downstream tasks' adaptability ${\widetilde{\lambda}}_{S, T}$:

% \begin{equation}\tag{5}
% \begin{aligned}
% \epsilon_\mathbb{T}\left(h_N\right)\le\epsilon_T\left(h_N\right)+2E\left[d\left(S,T\right)\right]+\\\sqrt{\frac{KL(h_N||h_0)+ln{\sqrt{4N}}-ln{\left(\delta\right)}}{2N}}+2\widetilde{\lambda_{S,T}}.
% \end{aligned}
% \end{equation}
% \end{thm} 

% The proof of Theorem~\ref{thm1} is given in the Appendix.
Maurer et al. \cite{Maurer2004Note} focused on the relationship between mean error and expected error in the target domain in 2004:

\begin{equation}\label{lemma1}
\begin{aligned}
\Delta _{h_{N}}\left( T,\mathbb{T} \right)\le\sqrt{\frac{KL(h_N||h_0)+ln{\sqrt{4N}}-ln{\left(\delta\right)}}{2N}}.
\end{aligned}
\end{equation} 
where $\Delta _{h_{N}}\left( T,\mathbb{T} \right)=\left|\epsilon_T\left(h_N\right)-\epsilon_\mathbb{T}\left(h_N\right)\right|$, $\epsilon_T\left(h_N\right)$ denotes the mean error in the target domain, $\epsilon_\mathbb{T}\left(h_N\right)$ represents the expected error in the target domain, $T$ stands for the data in the target domain dataset, $\mathbb{T}$ represents all the possible data that may exist in the target domain, while $h_0$ represents the prior model, $h_N$ represents the adapted model, and $N$ represents the number of samples used from the target domain dataset.

Anthony Sicilia et al. \cite{Sicilia2022PAC} derived an empirical calculation of the domain gap from source and target domain data:

\begin{equation}\label{lemma2}
\begin{aligned}
\Delta _{h_{N}}\left( S,T \right)\le\widetilde{\lambda_{S,T}}+E\left[d\left(S,T\right)\right].
\end{aligned}
\end{equation} 
where $\widetilde{\lambda_{S,T}}$ denotes the minimum value of the model's error sum over the source and target domains in the fine-tuning space to indicate the adaptability of upstream and downstream tasks, and $E[d(S,T)]$ denotes the domain gap as the H-divergence between two distinct domains.

Crammer et al. \cite{Crammer2006Learning} proposed the triangle inequality for errors:

\begin{equation}\label{lemma3}
\begin{aligned}
\Delta _{h_{N}}\left( S,\mathbb{T} \right)\le\Delta _{h_{N}}\left( S,T \right)+\Delta _{h_{N}}\left( T,\mathbb{T} \right).
\end{aligned}
\end{equation} 

Based on the above lemmas, Anthony Sicilia et al. proposed the PAC-Bayesian domain adaptation bound theory for multi-class classifiers:

\begin{equation}\label{lemma4}
\begin{aligned}
\epsilon_\mathbb{T}\left(h_N\right)\le\widetilde{\lambda_{S,T}}+\epsilon_S\left(h_N\right)+E\left[d\left(S,T\right)\right]+\\\sqrt{\frac{KL(h_N||h_0)+ln{\sqrt{4N}}-ln{\left(\delta\right)}}{2N}}.
\end{aligned}
\end{equation} 
however, the term $\epsilon_S(h_N)$ in the theory represents the empirical error of the fine-tuning model on the source domain data, which is independent of the adaptation to the downstream target domain. Conversely, the empirical error $\epsilon_T(h_N)$ on the target domain, which affects the adaptation, is not captured. To make the theory better represent the impact of target domain data on the adaptation, we propose the following theorem.

\newtheorem{thm}{\bf Theorem}
\begin{thm}\label{thm1}
The expected error $\in_\mathbb{T}\left(h_{N}\right)$ of the adapted fine-tuned model in the target domain is defined, with the empirical error of the fine-tuned model in the target domain denoted as $\in_{T}\left(h_{N}\right)$. The adaptability of the upstream and downstream tasks is defined as $\widetilde{\lambda_{S,T}}$, with $E[d(S, T)]$ representing the gap between the source and target domains, and $KL(h_N\ ||\ h_0)$ indicating the difference between the original model and the fine-tuned model. Here, N represents the number of data samples. Therefore, the expected error bound $\in_\mathbb{T}\left(h_{N}\right)$ of the fine-tuned model in the target domain, for adaptation, depends on its empirical error $\in_{T}\left(h_{N}\right)$, the domain difference $E[d(S, T)]$ between the source and target domains, model capacity $KL(h_N\ ||\ h_0)$, sample size $N$, and the adaptability of the upstream and downstream tasks ${\widetilde{\lambda}}_{S, T}$:

\begin{equation}
\begin{aligned}
\epsilon_\mathbb{T}\left(h_N\right)\le\epsilon_T\left(h_N\right)+2E\left[d\left(S,T\right)\right]+\\\sqrt{\frac{KL(h_N||h_0)+ln{\sqrt{4N}}-ln{\left(\delta\right)}}{2N}}+2\widetilde{\lambda_{S,T}}.
\end{aligned}
\end{equation} 

\end{thm}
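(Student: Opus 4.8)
The plan is to derive the desired bound by combining the three cited lemmas (Equations \eqref{lemma1}, \eqref{lemma2}, \eqref{lemma3}) in a way that replaces the source-domain empirical error $\epsilon_S(h_N)$ appearing in Sicilia et al.'s bound \eqref{lemma4} with the target-domain empirical error $\epsilon_T(h_N)$. The key observation is that the triangle inequality for errors \eqref{lemma3} is the natural tool for this substitution: it lets me route the discrepancy $\Delta_{h_N}(S,\mathbb{T})$ between the source data and the full target distribution through the intermediate quantity $T$ (the finite target sample), thereby introducing the target empirical error into the chain.

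First I would start from the definition $\Delta_{h_N}(T,\mathbb{T}) = |\epsilon_T(h_N) - \epsilon_{\mathbb{T}}(h_N)|$ and note that dropping the absolute value gives $\epsilon_{\mathbb{T}}(h_N) \le \epsilon_T(h_N) + \Delta_{h_N}(T,\mathbb{T})$. This immediately isolates the target empirical error as the leading term, which is exactly what the theorem requires. The task then reduces to bounding the residual discrepancy $\Delta_{h_N}(T,\mathbb{T})$ in terms of the quantities named in the statement. Next I would apply the Maurer bound \eqref{lemma1} directly to control $\Delta_{h_N}(T,\mathbb{T})$ by the PAC-Bayes complexity term $\sqrt{(KL(h_N\|h_0) + \ln\sqrt{4N} - \ln\delta)/(2N)}$, which accounts for model capacity and sample size.

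The subtle part is explaining why the factors of $2$ appear on $E[d(S,T)]$ and $\widetilde{\lambda_{S,T}}$. The plan here is to recognize that the theorem's bound is actually a symmetrized or doubled version obtained by passing through the source domain as well. Concretely, I would combine the triangle inequality \eqref{lemma3} with the empirical domain-gap bound \eqref{lemma2}: the term $\Delta_{h_N}(S,T) \le \widetilde{\lambda_{S,T}} + E[d(S,T)]$ contributes one copy of each of $\widetilde{\lambda_{S,T}}$ and $E[d(S,T)]$, while a second copy arises when relating the source-based bound \eqref{lemma4} back to the target, so that the two chains (source-to-target and target-to-full-target) together produce the doubled coefficients. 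I would assemble these pieces so that the three structural contributions, namely the domain gap $E[d(S,T)]$, the adaptability $\widetilde{\lambda_{S,T}}$, and the PAC-Bayes complexity term, each land with the stated coefficients alongside the leading $\epsilon_T(h_N)$.

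I expect the main obstacle to be making the coefficient bookkeeping rigorous rather than merely plausible, since the factors of $2$ must emerge from a principled combination of \eqref{lemma2}, \eqref{lemma3}, and \eqref{lemma4} rather than being inserted by hand. In particular, the delicate step will be justifying how the source-domain error $\epsilon_S(h_N)$ in \eqref{lemma4} is eliminated and re-expressed through the target quantities without introducing additional uncontrolled terms; this requires carefully choosing the order in which the triangle inequality is applied and verifying that the high-probability guarantee (holding with probability at least $1-\delta$) is preserved throughout, since the PAC-Bayes term \eqref{lemma1} is itself only valid on such an event.
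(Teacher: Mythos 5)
Your proposal is correct in substance, and its closing paragraph is essentially the paper's own argument, but you have spliced together two derivations that are each self-sufficient on their own. The paper's proof is exactly the substitution you sketch in your third paragraph, executed more simply than you expect: starting from (\ref{lemma4}), it writes $\epsilon_S(h_N)=(\epsilon_S(h_N)-\epsilon_T(h_N))+\epsilon_T(h_N)$, bounds $\epsilon_S(h_N)-\epsilon_T(h_N)\le|\epsilon_S(h_N)-\epsilon_T(h_N)|=\Delta_{h_N}(S,T)$, and applies (\ref{lemma2}); one copy of $\widetilde{\lambda_{S,T}}$ and of $E[d(S,T)]$ is already present in (\ref{lemma4}), the second copy comes from (\ref{lemma2}), and neither (\ref{lemma1}) nor (\ref{lemma3}) is ever invoked --- the probability-at-least-$1-\delta$ event is simply the one on which (\ref{lemma4}) holds, so the bookkeeping you worry about is a two-line rearrangement. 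What your write-up misses is that your first two paragraphs alone already finish the proof, and prove something stronger: $\epsilon_\mathbb{T}(h_N)\le\epsilon_T(h_N)+\Delta_{h_N}(T,\mathbb{T})$ combined with (\ref{lemma1}) bounds $\epsilon_\mathbb{T}(h_N)$ by $\epsilon_T(h_N)$ plus the complexity term only; since $E[d(S,T)]$ and $\widetilde{\lambda_{S,T}}$ are non-negative, the theorem then follows by adding them as slack, so under that route the factors of $2$ do not need to ``emerge'' from anything. If you instead want those factors to arise organically without invoking (\ref{lemma4}), your triangle-inequality idea can be made exact: $\epsilon_\mathbb{T}(h_N)\le\epsilon_S(h_N)+\Delta_{h_N}(S,\mathbb{T})\le\epsilon_T(h_N)+2\Delta_{h_N}(S,T)+\Delta_{h_N}(T,\mathbb{T})$ by the definitions and (\ref{lemma3}), after which (\ref{lemma2}) and (\ref{lemma1}) reproduce precisely the stated constants --- a genuinely different derivation that trades (\ref{lemma4}) for (\ref{lemma1}) plus (\ref{lemma3}), and arguably a cleaner one since it builds the bound from first principles rather than patching Sicilia et al.'s result. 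Any of these three assemblies is valid; the paper commits to the first, and your draft would be fixed by committing to one rather than blending them.
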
 

\begin{proof}
We choose to replace $\epsilon_S(h_N)$ in (\ref{lemma4}) with $\epsilon_S\left(h_N\right)-\epsilon_T\left(h_N\right)+\epsilon_T\left(h_N\right)$, which is organized into the form of (\ref{equation6}):

\begin{equation}\label{equation6}
\begin{aligned}
\epsilon_\mathbb{T}\left(h_N\right)\le\widetilde{\lambda_{S,T}}+\epsilon_S\left(h_N\right)-\epsilon_T\left(h_N\right)+\epsilon_T\left(h_N\right)+\\E\left[d\left(S,T\right)\right]+\sqrt{\frac{KL(h_N||h_0)+ln{\sqrt{4N}}-ln{\left(\delta\right)}}{2N}}.
\end{aligned}
\end{equation} 
as $\epsilon_S\left(h_N\right)-\epsilon_T\left(h_N\right)\le{|\epsilon}_S\left(h_N\right)-\epsilon_T\left(h_N\right)|$ is constant, we can derive that:

\begin{equation}\label{equation7}
\begin{aligned}
\epsilon_\mathbb{T}\left(h_N\right)\le\widetilde{\lambda_{S,T}}+{|\epsilon}_S\left(h_N\right)-\epsilon_T\left(h_N\right)|+\epsilon_T\left(h_N\right)+\\ E\left[d\left(S,T\right)\right]+\sqrt{\frac{KL(h_N||h_0)+ln{\sqrt{4N}}-ln{\left(\delta\right)}}{2N}}.
\end{aligned}
\end{equation} 
according to the definition of $\Delta _{h_{N}}\left( S,T \right)={|\epsilon}_S\left(h_N\right)-\epsilon_T\left(h_N\right)|$, it can be obtained:

\begin{equation}\label{equation8}
\begin{aligned}
\epsilon_\mathbb{T}\left(h_N\right)\le\widetilde{\lambda_{S,T}}+\Delta _{h_{N}}\left( S,T \right)+\epsilon_T\left(h_N\right)+\\E\left[d\left(S,T\right)\right]+\sqrt{\frac{KL(h_N||h_0)+ln{\sqrt{4N}}-ln{\left(\delta\right)}}{2N}}.
\end{aligned}
\end{equation} 
substituting (\ref{lemma2}) into (\ref{equation8}), the collation gives:

\begin{equation}\label{equation9}
\begin{aligned}
\epsilon_\mathbb{T}\left(h_N\right)\le\epsilon_T\left(h_N\right)+2E\left[d\left(S,T\right)\right]+\\\sqrt{\frac{KL(h_N||h_0)+ln{\sqrt{4N}}-ln{\left(\delta\right)}}{2N}}+2\widetilde{\lambda_{S,T}}.
% \nonumber\qquad \qquad  \quad\ \square
\end{aligned}
\end{equation} 

\end{proof}

The adaptability of the upstream and downstream tasks mentioned in the above theorem is determined by the nature of the tasks themselves. Once the tasks are fixed, this factor remains constant. However, by adjusting domain discrepancies, model capacity, and sample sizes, it is possible to enhance the model's generalization performance and reduce empirical errors. Therefore, domain discrepancies between the source and target domains, model capacity, and sample sizes are three fundamental factors that influence the adaptation of multi-modal foundation models.
\begin{figure*}[htp]
    \centering\includegraphics[width=0.98\textwidth]{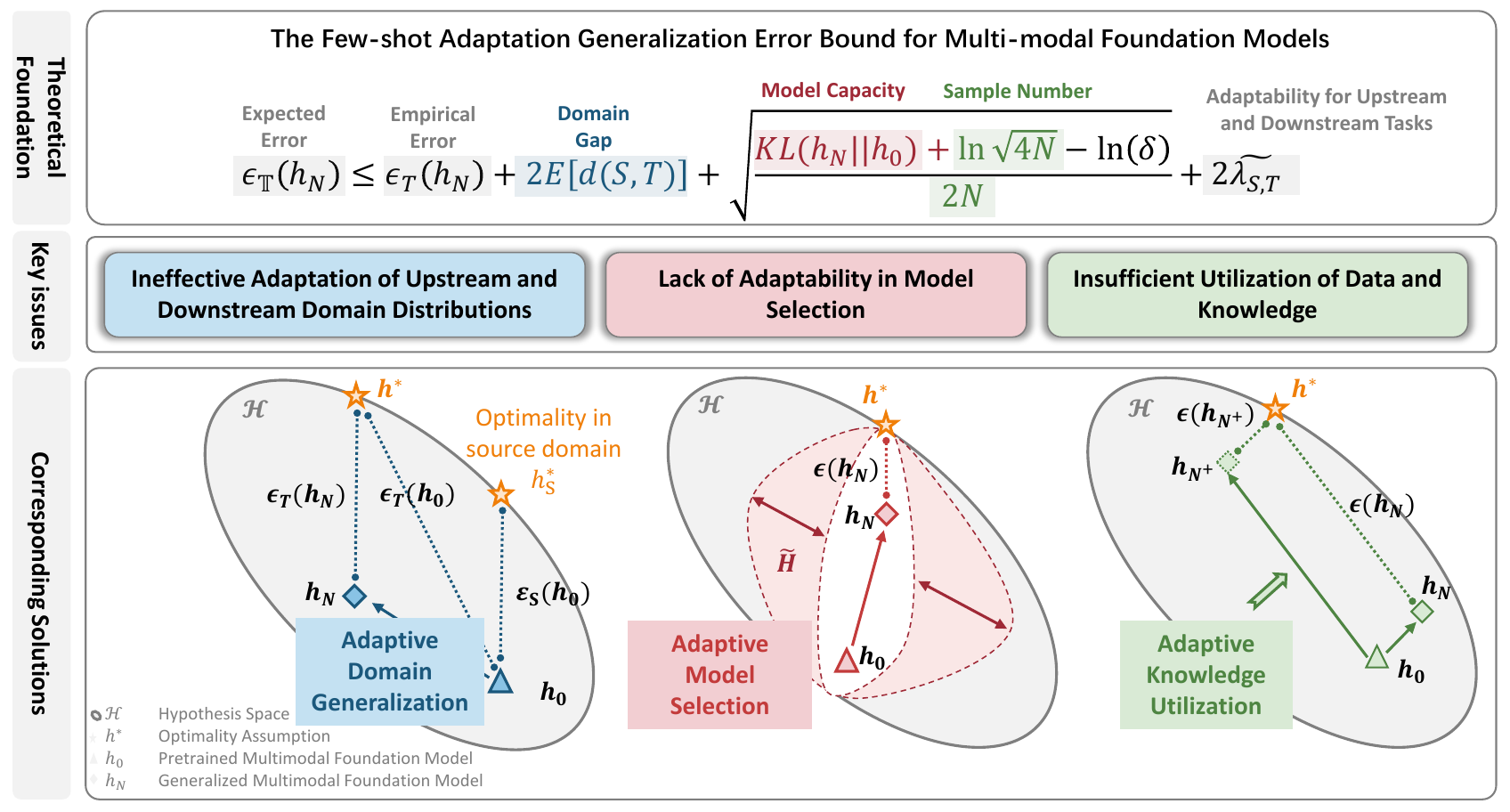}
    \caption{Key issues of the few-shot adaptation methods for multi-modal foundation models and corresponding solutions.}
    \label{fig:fig4}
    \end{figure*}
    
Taking guidance from the theory of generalization error bound in few-shot cross-domain adaptation, We start from three aspects, namely: adaptive domain generalization, adaptive model selection , and adaptive knowledge utilization, as shown in Figure \ref{fig:fig4}, to study the few-shot adaptation methods for multi-modal foundation models and propose corresponding solutions to the aforementioned problems:
    
\textbf{1) Adaptive domain generalization:} To address the issue of high modeling cost in domain adaptation, a possible approach is to consider source-free domain adaptation methods for multi-modal foundation models. The reconstruction error of a pre-trained autoencoder-based model can be used as a measure of domain distribution discrepancy and the advantages of prompt-based fine-tuning adaptation methods can also be combined, which can obtain more convenient and efficient adaptation methods. Additionally, to avoid the risk of modality gap during domain alignment and the loss of cross-modal semantic relevance, a multi-modal autoencoder can be introduced in the prompt reconstruction process to constrain the cross-modal joint distribution of data. This will help maintain the semantic consistency of textual and visual features during the domain adaptation process.

\textbf{2) Adaptive model selection :} Neural Architecture Search (NAS) \cite{Kang2023Neural} is a promising approach to address the problem of adaptive selection of adapter structures in multi-modal foundation models. It automatically explores different network architectures in the search space to find the best-performing structure for a given task. However, due to the complexity of multi-modal foundation model structures, NAS-based methods need to search for a wide variety of adapter types simultaneously, resulting in a large search space and high computational costs. In such cases, it is necessary to adopt a coarse-to-fine search strategy to design more efficient NAS-based search methods. 

\textbf{3) Adaptive knowledge utilization:} Traditional image augmentation techniques can generate a large number of augmented samples but may not effectively adapt to specific downstream tasks. In contrast, the continuous differentiable image augmentation paradigm can solve for optimal image augmentation parameters through derivation and backpropagation. Therefore, it is worth exploring a differentiable image augmentation approach to achieve adaptive image augmentation for downstream tasks. Additionally, introducing external knowledge can provide richer textual descriptions for multi-modal foundation models. However, it is essential to ensure that the introduced knowledge is highly relevant to visual semantics. One approach could involve using a large language model to generate reliable visual descriptions as references and then training visual filters through adversarial learning to ensure that the filtered textual descriptions contain valid visual semantics.
    
Due to the diversity of downstream tasks in the real world, they vary in domain distribution, task attributes, available sample quantities, and so on. Current adaptation methods of foundation models do not possess the capability to adapt to these factors effectively, resulting in limited model performance. This limitation has become a bottleneck hindering the further adoption of foundation models in various industries. Therefore, endowing multi-modal foundation models with adaptability in the context of downstream few-shot adaptation is crucial. This can be achieved through adaptive domain generalization, adaptive model selection , and adaptive knowledge utilization. These adaptations have the potential to significantly improve model performance and may represent important research directions in this field in the future.

\section{Conclusion} 

We have comprehensively summarized the methods for multi-modal foundation models in the context of few-shot adaptation tasks, including prompt-based fine-tuning adaptation, adapter-based fine-tuning adaptation, and external knowledge-based fine-tuning adaptation. Prompt-based fine-tuning adaptation methods avoid the tediousness of manually designing the textual prompt and require only a small number of parameters to be fine-tuned, thus effectively mitigating the overfitting problem. Adapter-based fine-tuning adaptation methods only need to update a small number of parameters and have the advantages of high efficiency, design flexibility, and robustness. External knowledge-based fine-tuning adaptation methods can alleviate the problem of insufficient prior knowledge and scarce training samples in downstream few-shot scenarios to a certain extent. However, these methods still have some limitations, such as ineffective adaptation of upstream and downstream domain distributions, lack of adaptability in model selection, and insufficient utilization of data and knowledge. Therefore, we believe that in the future, we need to take three perspectives: adaptive domain generalization, adaptive model selection, and adaptive knowledge utilization in order to improve the performance in multi-modal few-shot adaptation. In addition, we review 11 commonly used datasets for evaluating the downstream generalization performance of multi-modal foundation models and adopt four experimental setups to test the generalization performance of multi-modal foundation models under few-shot conditions. We hope that the summary and analysis of this survey can provide some insights and guidance for future research on the few-shot adaptation of multi-modal foundation models.

\bibliographystyle{IEEEtran}
\bibliography{IEEEabrv,few_shot_survey}

\end{document}